\newcommand{\rc}[1]{\textcolor{black}{#1}}
\title{\LARGE \bf
 Move-to-Data: A new  Continual Learning approach with Deep CNNs, Application for image-class recognition
 \thanks{This work was supported by ERASMUS+ Internship Mobility grant and CNRS Interdisciplinary Grant "RoBioVis". }
}
\author[1]{Miltiadis Poursanidis}
\author[1]{Jenny Benois-Pineau}
\author[1]{Akka Zemmari} 
\author[1]{Boris Mansenca} 
\author[2]{Aymar de Rugy}
\affil[1]{LaBRI, Univ. Bordeaux - CNRS, Bordeaux, France.}
\affil[2]{INCIA, Univ. Bordeaux - CNRS, Bordeaux, France.}
\begin{document}

\maketitle
\thispagestyle{empty}
\pagestyle{empty}

\begin{abstract}
In many real-life tasks of application of supervised learning approaches, all the training data are not available at the same time. The examples are life-long image classification or recognition of environmental objects during interaction of instrumented persons with their environment, enrichment of an online-database with more images.  It is necessary to pre-train the model at a "training recording phase" and then adjust it to the new coming data. This is the task of incremental/continual learning approaches. Amongst different problems to be solved by these approaches such as introduction of new categories in the model, refining existing categories to sub-categories and extending trained classifiers over them, ... we focus on the problem of adjusting pre-trained model with new additional training data for existing categories. We propose a fast continual learning layer at the end of the neuronal network. Obtained results are illustrated on the opensource CIFAR benchmark dataset. The proposed scheme yields similar performances as retraining but with drastically lower computational cost.
\end{abstract}

\section{INTRODUCTION AND RELATED WORK}
\label{sec:intro}

Deep Learning has gained an increasing attention during the past years \cite{Lecun2015}, specifically Convolutional Neural Networks (CNNs) for different visual recognition tasks \cite{GU2018354}. Hence, the problems which have been solved by the past with other machine learning approaches have to be developed from Deep Learning perspective. One of them is the incremental/continual/life-long learning. With life-long learning, we mean the ability of humans to learn through experience overtime. For supervised (Deep Learning) approaches, it means that the data for model training are not all available at glance and the model has to be adjusted with the new coming data. In the case of unsupervised learning, such approaches have been proposed since quite a time, e.g. incremental clustering \cite{DBLP:journals/pr/Lughofer08},\cite{ZhangFS08}, \cite{DBLP:conf/cbmi/MansencalBVD12}. In supervised (Deep Neural Networks) framework, while offline models have shown to be successful in an abundance of fields, such as medical image classification \cite{AderghalKKBAC18}, recognition of historical objects in digital cultural heritage management \cite{ObesoBGR18} and others, online models still seem to lack effectiveness.\\
The scenarios which might require continual learning approaches are the following but not limited to: 
i) introduction of new categories from new coming data \cite{DBLP:conf/eccv/CastroMGSA18}, specifically with a few data \cite{DBLP:conf/cvpr/GidarisK18}; ii) refining existing categories to sub categories (hierarchical classification) \cite{DBLP:conf/cbmi/DuttPQ17}; iii) additional training data for existing categories. The latter is not only needed in the scenarios where the existing database is permanently enriched (such as in cultural heritage management), but also in tracking of objects of known class \cite{BMVC.28.56}, which is a "online-learning" for class-specific tracking considered in this work, requiring re-training with gradient descent. Our contribution concerns this very case: once a model has been pre-trained on a sufficiently large data set, it has to be adjusted with new coming data without adding or refining categories. \\
Here comes an important question about catastrophic forgetting. It means that if a system has been trained on a bunch of data and shows good performances, then sequentially trained with new data on new tasks, it could forget about the older tasks \cite{DBLP:journals/corr/CatastrophForget}. This is the so-called "stability-plasticity dilemma". 

In case of human reasoning, the stability–plasticity dilemma regards the extent to which a system must be prone to adapt to new knowledge and, importantly, how this adaptation process should be compensated by internal mechanisms that stabilise and modulate neural activity to prevent catastrophic forgetting \cite{Parisi2019}. 
The catastrophic forgetting is already present in the neural networks due to the process of gradient-descent based optimisation of the parameters. 
At each forward pass the loss is computed with the available model, then during the backward propagation, the gradient of the loss with respect to each parameter is computed. And the parameter adjustment during the optimization process is performed with this new loss \cite{Rumelhart'1986}. Therefore, if the training data is coming sequentially, as this is the case in continual learning, we will observe a « model drift » from already optimized solution. Some attempts have been made to avoid catastrophic forgetting as in \cite{ShmelkovSA17}, but the authors consider the case of new classes which appear without pre-training. And their problem is to balance the performances on "old" classes and new ones. We are interested in the case when the taxonomy does not change along the time, but the object appearance may do, as in tracking due to auto occlusions and progressive changes of the view-point or in database enrichment, when the initial database is continuously incremented by different view of the same visual content. 
The re-training of neural networks in order to adapt models to the new coming content requires a heavy computational workload due to the back-propagation pass in gradient optimization.   

In this paper, we propose  a novel approach for continual learning, which does not require gradient based optimization.
The motivation of it was the recognition of objects to grasp in assistance to amputees with vision-based Neuro-prostheses \cite{Gonzalez-DiazBD19},\cite{Gonzalez-Diaz2018} when different views of the same object -to-grasp come "on the fly" to adjust the pre-trained model. Another scenario is continual image database enrichment as in the application of \cite{ObesoBGR18}. We show that the method performs "not worse" than continual learning by sequential gradient descent optimization. A mathematical fromulation of the method is given and experiments  on open image dataset CIFAR-10 \cite{CIFAR2009} are reported. 
Our approach to incremental learning differs from all the others introduced so far. During the learning procedure we aim to internally change the neuronal networks weight structure rather than changing the whole architecture. We also distance ourselves from classical retraining, which is too computationally expensive for real world applications. When processing data on the fly retraining would be far from real time. 
The core idea of the method is the adjustment of a weight of the neuron responding to the class of the training example coming sequentially "on the fly". The reminder of the paper is organized as follows. In Section 2 we present mathematical bases of our method. In Section 3 we report on the experiments on publicly available benchmark database. Section 4 concludes this work and outlines its perspectives.  






\section{MOVE-TO-DATA: A CONTINUAL LEARNING  METHOD}
\label{sec:MOVE_TO_DATA}
In the following we will restrict ourselves to the context of deep learning even thought the definitions can be expanded to the field of machine learning in general. To explain Move-to-Data method we will first introduce notations. 
\subsection{Definitions and Notations}
Suppose we have a sequence of labeled data $(x_i, y_i)_{i \in I}$ with some index set $I\subset \mathbb{N}$. Let us also assume we have $d$-dimensional input data $x_i \in \mathbb{R}^d$ and $c$-dimensional 1-of-c encoded label $y_i \in \{e_1, \dots , e_c \}$, where $e_i\in \mathbb{R}^c$ is a unit vector while $c\in \mathbb{N}$ is the number of classes. To given parameters $\Theta \in \mathbb{R}^N$, \rc{where $N$ is the number of the neural network parameters}, we \rc{can} define the \rc{neural network} as a function, see eq.:
\begin{equation}
\label{eq:GeneralFunction}
f_\Theta : \mathbb{R}^{\rc{d}} \to \rc{\{e_1, \dots , e_c \}^c}.    
\end{equation}

Assume further that the activation functions of our neural network are: 
\begin{equation}
\label{eq:ReLu}
Relu(x) = \max\{0,x\},    
\end{equation}
Any other non-linear response can be used such as sigmoid, leaky ReLu, ... Please note that we implicitly assumed a fixed architecture of the Neuronal Network, that is a known shape of the function (\ref{eq:GeneralFunction}).

Let $T\in \mathbb{N}$ and $S:= \{ (x_i,y_i): i \in \mathbb{N} \}$. Suppose now we have a neural network $f_{\Theta_T}$ trained offline on the data:
\begin{equation}
\label{eq:TrainedNetwork}
S_{< T} := \{ (x_i,y_i) \in S : i \in I, i <T  \}    
\end{equation}
In the online learning setting, the learning step would consist of finding the parameters $\Theta_T$ by using all previous information, that is, all previous models and all previous data. We are thus searching for a mapping:
\begin{equation}
\label{eq:AllOnlineMapping}
  (\Theta_1 , \Theta_2,  \dots \Theta_{T-1}, S_{< \rc{T+1}}) \mapsto \Theta_{T}  
\end{equation}
In many applications, we neither have the storage nor the computational power to process all past models $\Theta_1 , \Theta_2,  \dots \Theta_{T-1}$ and data $S_{< \rc{T+1}}$. Therefore, in incremental learning, we would like to find a mapping of the form:
\begin{equation}
\label{eq:IncrementalLearning}
 \Theta_{T-1}, (x_T, y_T) \mapsto \Theta_{T}   
\end{equation}
depending only on the first new coming data $(x_T, y_T)$  and on the previous model $\Theta_{T}$. 

\subsection{"Move-to-Data": Incremental learning approach for a Deep CNN}
\label{subse:Move-To-data}
The leading idea of our approach is to maintain the overall structure of the deep neural network and slightly adapt the model to the new data stream on a small scale. 

The naive approach would be to retrain the model with gradient descent. Nevertheless, it is time consuming to apply back propagation on each new data point arriving on the fly. 

Let us start off with an observation. Let $v, w\in \mathbb{R}^d$ with $ \| v \| , \| w \| = 1$. Then the scalar product $ \langle u, w \rangle  $ is large if the angle between $v$ and $w$ is small. This, rather trivial observation, indicates that for a high activation at a neuron, the weight vector and the feature vector must have been similar regarding the angle between them. This is often referred as the cosine similarity, this principle has been largely used in Content-Based Image Retrieval Systems \cite{Hafner'95}. 

In CNNs convolutional layers serve to extract features and are followed by fully connected (FC) implementing a neural classifier, see for instance \cite{ZemmariB20} for explanations. These layers which can be several cascaded,are implementing Multi-Layerd Perceptron with hidden layers or not. Let us now focus on one last hidden (FC) layer $L$ of a neural network  with width $l \in \mathbb{N}$. As above, we have a feature vector $v_i \in \mathbb{R}^l$ corresponding to the input data $x_i$ for some $i\in \mathbb{N}$. The output vector $ \hat y_i$ is then given by $\hat y_i = W v_i$, where the weights $W$ are defined as follows:

\begin{equation}
\label{eq:weights}
W = (w_1, \dots, w_l)^{T} \mbox{ and } w_j \in \mathbb{R}^c, \forall j = 1, \dots, l.
\end{equation}
For the activation of the $j$-th neuron corresponding to the $j$-th class, we have $\hat y_i^j =\langle w_j, v_i \rangle$. As suggested above, the activation $\hat y_i^j$ will be increased if the weight vector $w_j$ is closer to the feature vector $v_i$. So for given label in the incremental step $y_i$ belonging to the class $j$ for some $j \in \{1, \dots c \}$, in other words $y_i = e_j$, we move the weight vector $w_j$ to the direction of the feature vector $v_i$ as defined in the following equation:

\begin{eqnarray}
\label{eqnarray::update}
w'_j = w_j +(v_i-w_j)\epsilon,
\end{eqnarray}
where $1 > \epsilon > 0 $ is chosen be small. 

In the context of incremental learning, we are receiving new data $(x_T, y_T)$ on the fly. For each new data point we apply the formula (\ref{eqnarray::update}). This procedure we call \textit{Move-to-Data}. The Move-to-Data method  lets  
the loss function to decrease (see section \ref{sec:ExperimentsAndResults)}. However, one should notice that the loss function will continue decreasing until all the classes have been seen in the samples.

Furthermore, it is important to note that as formulated in (\ref{eqnarray::update}) $w_j$ and $v_i$ need to be unit vectors to prevent biases induced by scaling. It is common practice to normalize feature and weight vectors. For features it is better known as "feature scaling" and for weights it has the name of re-parametrization\cite{DBLP:journals/corr/SalimansK16}. In our case, we do not normalize the weights $w_j$, but move them to the data vector by the following equation using projection of weigt vector $w_j$ on data vector $v_i$ direction: 

\begin{eqnarray}
\label{eq:ProjectionUpdate}
w'_j =  w_j +(\left\lVert w_j \right\rVert*\frac{v_i}{\left\lVert v_i\right\rVert}-w_j)\epsilon,
\end{eqnarray}

where $\left\lVert .\right\rVert$ is the Euclidean norm. 
Note that the proposed adjustment of weights concerns only FC layer. Hence it is applicable not only to CNNs but to a classical MLP as well as to recursive neural networks.

\section{EXPERIMENTS AND RESULTS}
\label{sec:ExperimentsAndResults)}
In this section we describe experiments we have conducted to evaluate proposed Move-to-Data approach on an open dataset: CIFAR10 \cite{CIFAR2009}.


The CIFAR10 dataset consists of 60000 RGB images  of dimension 32x32x3. There are 10 classes with 6000 images per class, respectively. The classes are mutually exclusive. The dataset is split up into 50000 training images and 10000 test images which we will call "original training set" and original "test set". 


\subsection{CNN configuration}

We use a ResNet56v2 \cite{he2016identity} convolutional neural network. 
It is trained with a batch size of 32, for 160 epochs from scratch on a subset of the 50000- images original training set of  CIFAR10, see \ref{subsec:DatasetUsage}.
The optimizer is Adam \cite{kingma2014adam}, with  decreasing rate from 0.001 to 0.0001 accordingly to the following non-linear function eq;
\begin{equation}
\label{eq:LRDecay}
    lr_i = lr_{i-1} + \frac{1}{1+\delta*i}
\end{equation}
Here $lr$ denotes learning rate,  $\delta$ is the $lr$ decay, $\delta = 10^{-6}$, and $i$ is the number of iteration. The loss function is the cross entropy loss (see \cite{ZemmariB20} for a formal definition). 
Slight data augmentation (random translations and horizontal flip) is used during training.


\subsection{Dataset usage}
\label{subsec:DatasetUsage}
First, a model is trained on a subset of the training images from original training set.
This training is done only on 10\% (i.e., 5000 images) of the original training set of 50000 images. The images are  almost uniformly distributed between classes: the less populated class contains 460 images that is 9.2\% of the offline training subset and the most populated class contains 520 images (10.4\%). The purpose of this split was twofold: first, to have enough (90\%) of images remaining to evaluate the incremental learning. Second, we have a less precise model to better distinguish changes in accuracy during incremental learning.


The remaining 90\% (i.e., 45000) images are split in $N$ data chunks. In this work we used N = 10, thus having 4500 images in each data chunk. We consider that these chunks are received progressively, one after another. The models are progressively adapted on all images at each data chunk reception. The accuracy metric is computed once all chunk of 4500 has been used for model adaptation. 

\subsection{Results}

We compare two methods for quick model adaptation to each newly received chunk. The comparison is done on the "original test set"  The base-line method, is a plain fine-tuning applied only on the last FC layer using a new data-chunk. This means that all parameters in convolution layers are frozen as it is done for some layers in \cite{Yosin'2014}. The second method is our Move-to-Data method. 

For fair comparison, the fine-tuning is done with a batch size of 1. This is the same strategy as in Move-to-Data, when the weight vector $W$ is adjusted with each coming and passed through the network data vector. Both methods are implemented in Keras \cite{chollet2017deep}, with a tensorflow backend, and are run on CPU.

The Figure~\ref{fig:MtdEpsilon} shows Move-to-Data models' accuracies for different choices of $\epsilon$ parameter (see Eq. \eqref{eqnarray::update}). 

\begin{figure}[!h]
     \centering
     \includegraphics[width=0.5\textwidth]{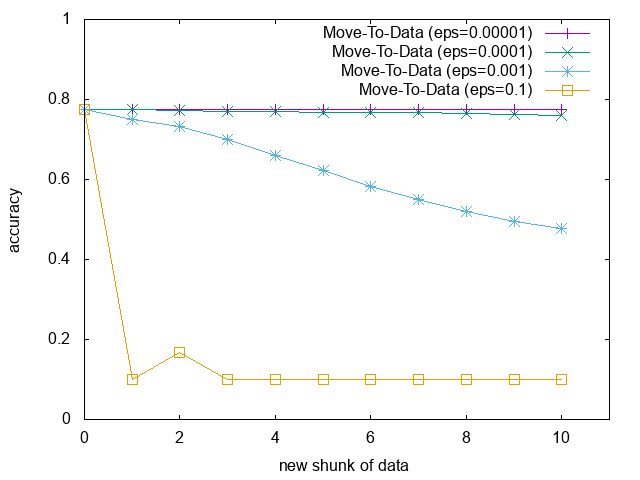}
     \caption{Accuracy for different values of epsilon for Move-to-Data model.}
     \label{fig:MtdEpsilon}
 \end{figure}

Clearly, too strong "move" with $\epsilon$ of 0.1 yields very strong model drift with a catastrophic forgetting. The model does not generalise on the data. If the "Move-to-data" is weak $\epsilon = 0.0001$ then the Move-to-Data method gives decent accuracies close to 0.78 with slight increase over chunks of arriving data. 

The Figure~\ref{fig:FtVsMtd} shows the evolution of model accuracy for successive data chunks for both methods: Move-to-Data and classical fine-tuning, here the $\epsilon$ parameter is fixed to 0.0001. The accuracies are very close, starting form initial accuracy 0.776.  at the beginning Move-to-Data is even slightly better and at the end, at 10th chunk, they are practically equal (0.762 for fine tuning and 0.761 for Move-to-Data). This means that our method has the same "catastrophic forgetting" as the gradient descent sequential fine-tuning. And this is without heavy gradient descent computations.  

\begin{figure}[!h]
     \centering
     \includegraphics[width=0.5\textwidth]{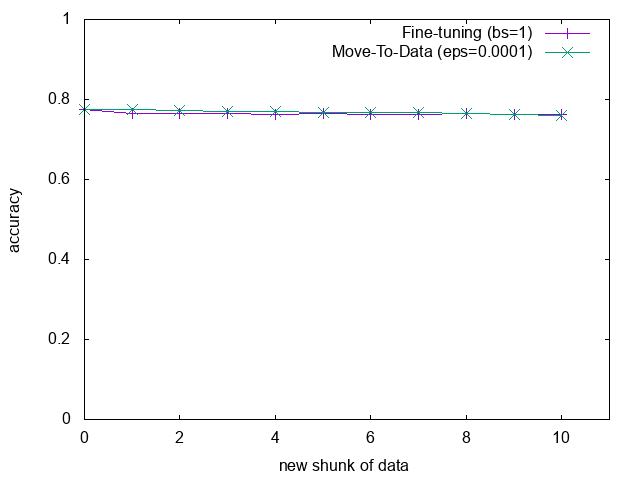}
     \caption{Accuracy between successive data chunks (N=10) for the two methods: Fine-tuning and Move-to-Data.}
     \label{fig:FtVsMtd}
 \end{figure}
Our method was implemented only in CPU. Hence to compare its computational time to the fine-tunning base-line, we also perform the fine tuning only in CPU. The results are illustrated in  Figure~\ref{fig:FtVsMtdTimeGPU}. We also give Fine-Tuning times with GPU acceleration with batch size 1 (as in our case of the base-line).  Obviously, the CPU implementation cannot compete with GPU acceleration, but all conditions equal (CPU), the Move-to-Data largely bypasses fine-tuning in computational speed, being more than  4 times faster than the latter. The mean computation times along the chunks of the data  are 1327,179 $\pm$136,908 and 333,304$\pm$47,499 for fine-tuning and Move-to-Data respectively for CPU implementation.


\begin{figure}[!h]
     \centering
     \includegraphics[width=0.5\textwidth]{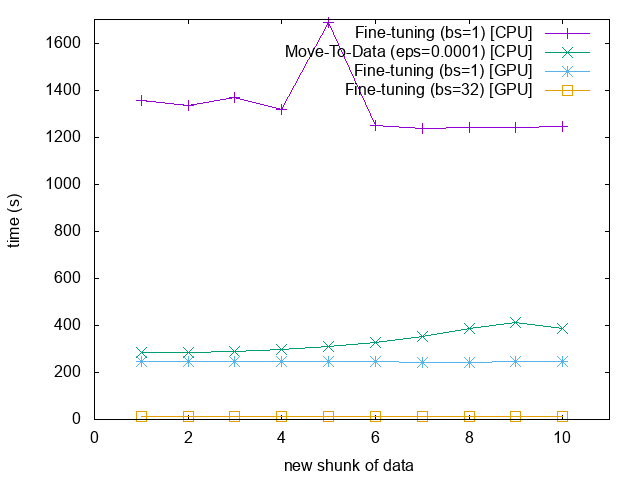}
     \caption{Time of various methods, on CPU or GPU.}
     \label{fig:FtVsMtdTimeGPU}
 \end{figure}

\section{CONCLUSION AND PERSPECTIVES}
Hence, in this paper, we have proposed a new method, Move-to-Data, which is a continual learning approach for deep convolutional neural networks classifiers. We presented and discussed mathematical formulation of the approach and tested it on a publicly available dataset CIFAR10. The method acts only on the last layer of the last fully connected layer of a "classification" part of a CNN. It is generic and can be applied to other kinds of Neural Networks: MLP and RNNs. The experiments show that it is more than 4 times faster than the base-line fine-tuning with the gradient descent while having the same catastrophic forgetting effect measured by comparison of accuracies attained by the two methods.  The next step would be to extend the Move-to-Data method to the last two or three fully connected layers. 

This paper introduces the method and presents the results of its experimental evaluation.  The proof of convergence remains in the perspective of this work as well as its application for objects tracking in video.









\section*{Acknowledgments}
The work has been supported by CNRS Interdisciplinary Grant RoBioVis, ERASMUS + internship grant, University of Bordeaux. We thank master student Eliot Ragueneau for his help in data preparation. 




\bibliographystyle{IEEEtran}
\bibliography{Incremental_Learning.bib}

\end{document}


\maketitle
\thispagestyle{empty}
\pagestyle{empty}

\section{Appendix A}

\subsection{Mathematical justification of Move-To-Data}
In the following let $d\in \mathbb{N}$ and $\epsilon \in (0,1)$. We fix a vector $w \in \mathbb{R}^d$ and a sequence $(v^n)_{n\in \mathbb{N}} \subset \mathbb{R}^d$. The sequence resulting from the \textit{Move-To-Data} Algorithm $(w^n)_{n\in \mathbb{N}}$ is defined recursively as
$$w^{n+1} := w^n + \epsilon (v^n -  w^n) $$
for all $n\in \mathbb{N}$ with the notation $w^0 := w$.
\begin{lemma}
The \textit{Move-To-Data} Algorithm can be rewritten in 
$$w^n = \sum_{i=1}^n \epsilon(1-\epsilon)^{n-i} v^i. $$
\end{lemma}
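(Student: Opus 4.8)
The plan is to prove the closed form by induction on $n$, after first rewriting the one-step update in linear form. Collecting terms in the defining recursion gives
$$w^{n+1} = w^n + \epsilon(v^n - w^n) = (1-\epsilon)\,w^n + \epsilon\,v^n,$$
so each iterate is a convex combination of the previous iterate and the incoming vector. This identity is the whole engine of the argument: it isolates the factor $(1-\epsilon)$ that will generate the geometric weights $(1-\epsilon)^{n-i}$ appearing in the claimed expansion, while the term $\epsilon v^n$ supplies the newest data contribution with weight $\epsilon$.

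First I would settle the base case by evaluating the update once starting from $w^0 = w$ and matching the result against the right-hand side under the empty-sum/first-index convention. For the inductive step I would assume the formula at level $n$ and substitute it into the linear recursion. The mechanism is the standard ``peel off the top term, factor $(1-\epsilon)$ out of the rest'' manipulation for first-order linear recurrences: multiplying the inductive hypothesis by $(1-\epsilon)$ sends every exponent $n-i$ to $(n+1)-i$, reproducing all but the highest-index term of the target sum for $w^{n+1}$, while the freshly injected $\epsilon v^{n}$ contributes exactly that missing top term. Reassembling the two pieces yields $\sum_i \epsilon(1-\epsilon)^{(n+1)-i}v^i$, closing the induction.

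The main obstacle is bookkeeping rather than anything deep: one must align the index of the vector entering the update with the summation index of the closed form, and one must reconcile the base case with the initial condition $w^0 = w$. Carried through honestly, the induction produces an additional decaying summand $(1-\epsilon)^n w$ coming from the initial vector, so the stated identity is precisely what survives once this term is absorbed (it vanishes when $w=0$ and tends to $0$ as $n\to\infty$ otherwise). I would therefore either state this initial-term convention explicitly or track the $(1-\epsilon)^n w$ contribution through the recursion and verify that it collapses as claimed; after that, only the two index shifts require care, and no estimates are needed.
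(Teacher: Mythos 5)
Your proposal is correct and follows essentially the same route as the paper: both arguments rest on rewriting the update as $w^{n+1} = (1-\epsilon)w^n + \epsilon v^n$ and then unrolling the recursion, which your induction merely carries out rigorously instead of the paper's ``continuing the pattern.'' You are in fact more careful than the paper, whose proof silently discards the initial-condition contribution $(1-\epsilon)^n w$ (and is loose with the summation index): as you observe, the stated identity is exact only under the convention $w=0$, and otherwise holds up to this geometrically vanishing term.
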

\begin{proof}
We simply expand
\begin{align*}
w^{n+1} &= w^n + \epsilon (v^n -  w^n) \\
&= \epsilon v^n + (1-\epsilon) w^n \\
&= \epsilon v^n + (1-\epsilon)(\epsilon v^{n-1} + (1-\epsilon) w^{n-1 }) \\
\end{align*}
Then continuing the pattern
$$w^{n+1} = \sum_{i=1}^{n+1} \epsilon(1-\epsilon)^{n-i} v^i. $$
\end{proof}
In order to proof stochastic results we introduce a measure space $(\Omega , \mathcal{B}_\Omega , \mu )$